\documentclass[letterpaper, 10 pt, conference]{ieeeconf}  

\IEEEoverridecommandlockouts                              

\usepackage{graphicx} 
\usepackage{amsmath} 
\usepackage{amssymb}  
\usepackage{algorithm}
\usepackage{algpseudocode}
\let\labelindent\relax
\usepackage[shortlabels]{enumitem}
\usepackage{multirow}
\usepackage{cite}

\newtheorem{definition}{Definition}
\newtheorem{lemma}{Lemma}
\newtheorem{remark}{Remark}
\newtheorem{corollary}{Corollary}

\title{\LARGE \bf
Agent Utilities over Generalized Voronoi Regions and their Gradients
}

\author{Andre N. Costa$^{1,2}$, Petter \"Ogren$^{1}$ and Carlos H. C. Ribeiro$^{2}$
\thanks{$^{1}$Andre N. Costa and Petter \"Ogren are with the Robotics, Perception and Learning Lab., School of Electrical Engineering and Computer Science, Royal Institute of Technology (KTH), SE-100 44 Stockholm, Sweden, {\tt\small andrenc@kth.se, petter@kth.se}}
\thanks{$^{2}$Andre N. Costa and Carlos H. C. Ribeiro are with the Computer Science Division, Aeronautics Institute of Technology, Sao Jose dos Campos, SP, Brazil, {\tt\small carlos@ita.br}}%
}

\begin{document}

\maketitle
\thispagestyle{empty}
\pagestyle{empty}

\begin{abstract}
In this paper, we generalize the concept of Voronoi regions, define agent utility as the integral of a utility density over the corresponding Voronoi region, derive gradients of the utility, and illustrate the approach in a two-team example from soccer.

The generalization of Voronoi regions is in the form of so-called
Cost-Induced Voronoi (CIV) regions, where the agent state space may differ from the space being partitioned. One example of such regions is when the cost is given by the optimal solution of an LQR control problem. Then the agent states include position as well as velocity, while the partitioned space only includes positions.

The agent utility is defined by integrating some utility density over the CIV region of the agent.
This utility density might be the probability density of some beneficial event, such as receiving a pass in soccer. The utility is then the overall probability of receiving a pass and the gradient represents a way to improve that probability.
We show how this utility gradient can be computed using the Reynolds Transport Theorem from fluid mechanics, and that this approach achieves similar accuracy while reducing computation time by about an order of magnitude compared to a baseline finite-difference approximation.
\end{abstract}

\section{INTRODUCTION}
In dynamic and competitive environments, agents might reason about which regions of space they can reach or influence more efficiently than other agents. In the literature, such a region is sometimes called the agent’s dominance region, and one way to compute it is through the classical Voronoi diagram, where each agent is assigned the set of points for which it is closer than any other agent under a given metric.

In this paper, we generalize the Voronoi regions by separating the agent state space from the points in the region to be partitioned. The metric of classical Voronoi diagrams then becomes a cost function, since it is no longer a distance between elements of the same set. We call the resulting regions Cost-Induced Voronoi (CIV) regions.

\begin{figure}
  \centering
  \includegraphics[width=0.99\linewidth]{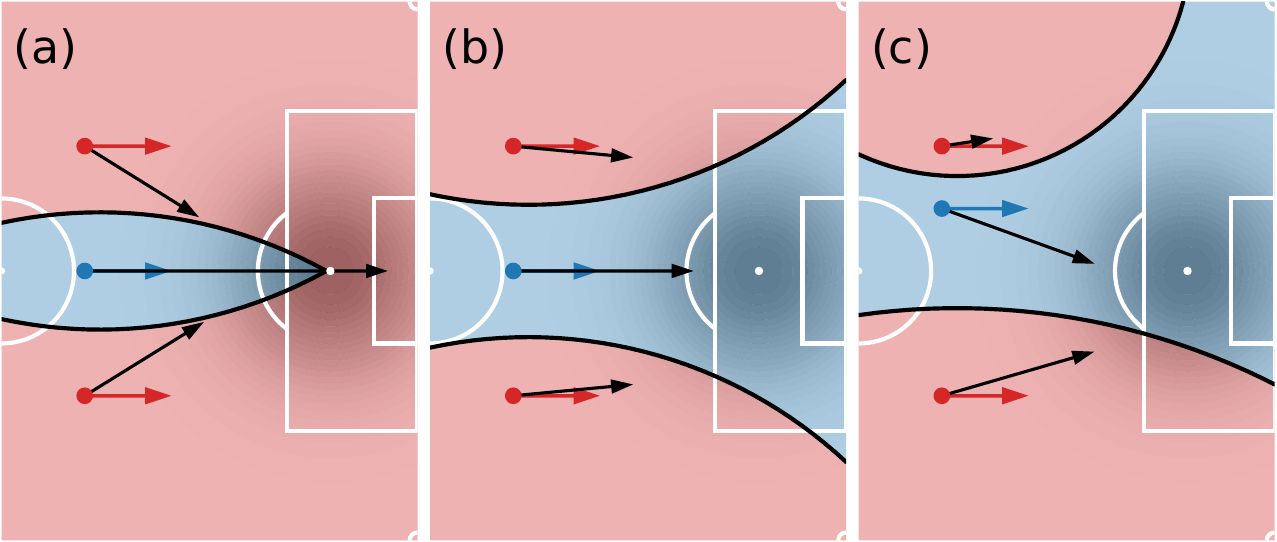}
  \caption{LQR-Cost-Induced Voronoi regions (LQR-CIV), and their gradients, for two teams on a soccer field. 
  Two red attackers and one blue defender are moving to the right (colored markers and arrows). The utility density for different parts of the field has a maximum in front of the goal, as this is considered the most attractive target for a pass from the attacking team.
  In (a) the blue team is physically weaker (higher drag and control cost, see Section \ref{sec:football}), resulting in smaller blue LQR-CIV regions. The utility gradients (black arrows) show that it would be beneficial for the red attackers to move
  closer to the weaker defender, to capture even more of the high utility area in the center.
  In (b) the blue team is physically stronger (lower drag and control cost), resulting in a larger blue LQR-CIV region. The utility gradients of the red attackers now suggest that it is best to maintain a separation to the strong defender while moving to the right.
  In (c) the blue team is still stronger, but the position of the blue agent is shifted upwards.
  Now the upper red attacker should move slightly away from the strong defender to increase separation, while the lower red attacker already has sufficient separation and should move toward the goal to gain more high-utility area.
  }
  \label{fig:casestudy}
\end{figure}

The costs can be arbitrary, but an important subclass arises when they are given by the solution of Linear Quadratic Regulator (LQR) problems; we call these LQR-CIV. In particular, we review the case where the linear system corresponds to a second-order dynamical system with drag, parameterized by the control cost and the drag coefficient. We show that this case gives rise to quadratic boundaries of the LQR-CIV regions, reflecting the underlying optimal-control cost structure.

We define the utility of each agent state as the integral of a utility density over its CIV region. The density may represent the probability of a beneficial event, such as scoring a goal or receiving a pass in sports. Since the CIV region corresponds to positions where the agent is more competitive (i.e., has the lowest cost), the integral of the utility density over the CIV region corresponds to the probability of the event occurring for that agent.

The gradient of the utility is an important tool for improving it, either in terms of finding the optimal agent states, or making incremental adjustments.
An illustration of utility gradients in a soccer example is shown in Fig.~\ref{fig:casestudy}.

Using Reynolds Transport Theorem (RTT) from fluid dynamics, we show how the utility gradient can be computed using a line integral along the CIV region boundaries instead of evaluating an area integral over the region.

We also show how the results extend to adversarial team settings, where the team utility is defined as the sum of the utilities of the non-adversarial agents.

To investigate the practical use of the result above, we compare a contour-integral (CI) method based on this result, and a baseline finite-difference (FD) method computing utilities under small perturbations, as in \cite{costaControlTheoreticFrameworkVoronoilike2025}.
Across representative configurations, CI matches FD within a few percent in the relative error of the computed utility gradients, while running about 25 times faster.

In summary, the main contributions are:
\begin{enumerate}
    \item A unified framework for CIV regions, in which partitions are defined by general costs based on agent states and spatial points (not necessarily in the same space), with LQR-CIV as a special case including both agent dynamics and control effort;
    \item A new boundary-integral expression for agent and team-utility gradients computed using RTT; and
    \item An efficient CI implementation based on the RTT result, compared with an FD baseline, reducing computation time by about 25× in our tests.
\end{enumerate}

\section{RELATED WORK}

Classical and centroidal Voronoi-type tessellations~\cite{cortesCoverageControlMobile2004,bulloDistributedControlRobotic2009,Bhattacharya2013MultiRobotCoverage,Lloyd1982LeastSquares,du1999centroidal} have been widely used for coverage, sensing, and task allocation, including density weighting to account for spatial importance. Extensions with anisotropic metrics~\cite{richter2015mahalanobis,hayashi20172d} and time-varying densities~\cite{Kennedy2019TimeVaryingDensity} further broaden applicability, yet these formulations remain distance-based and largely decoupled from agent dynamics and control effort. 

To incorporate dynamics and control effort, generalizations replace Euclidean distance with optimal-control costs. In~\cite{bakolas2010optimal,bakolas2013optimal,bakolas2015partitioning}, minimum-cost partitions under linear dynamics were studied, while quadratic boundaries and gradient formulas were derived in~\cite{costaControlTheoreticFrameworkVoronoilike2025}, linking them with density-weighted utilities. 

Adversarial Voronoi Regions (AVRs) were introduced in~\cite{costaOptimizingLocationsOpposing2025} to model team competition under Euclidean distance partitions. Related pursuit-evasion and interception works~\cite{zhouCooperativePursuitVoronoi2016,piersonInterceptingRogueRobots2017} extend this idea to reachability-based settings via safe-reachable sets (SRSs) under velocity-controlled dynamics; more specifically, \cite{zhouCooperativePursuitVoronoi2016} handles unequal speeds and nonconvex domains with grid-based Fast Marching computations, while \cite{piersonInterceptingRogueRobots2017} considers equal-speed agents where the SRS coincides with classical Voronoi cells, enabling decentralized pursuit.

This paper goes beyond the work above in the following sense:
The only previous work that investigates utility gradients is \cite{costaControlTheoreticFrameworkVoronoilike2025}, but it only considers classical Voronoi regions with straight line boundaries, not the general case considered here.

\section{BACKGROUND}
\label{sec:background}

In this section, we first describe classical Voronoi diagrams and their generalization to quadratic transfer costs. We then discuss adversarial team regions in competitive settings and, finally, introduce the RTT which will be a key tool when deriving team utility gradients over evolving partitions.

\subsection{Classical Voronoi Diagrams}

Classical Voronoi diagrams partition the Euclidean space into regions nearest to each agent in distance:
\begin{equation}
    V_i = \{\, q : \|q - p_i\| \leq \|q - p_j\|,\ \forall j \neq i \,\},
    \label{eq_voronoi}
\end{equation}
where \(p_i \in \mathbb{R}^n\) denotes the position of agent \(i\), and \(q \in \mathbb{R}^n\) a generic spatial point.

These regions are separated by planar bisectors orthogonal to the line connecting each pair of agents. Voronoi partitions have been widely used in coverage, sensing, and coordination problems~\cite{cortesCoverageControlMobile2004,bulloDistributedControlRobotic2009,Bhattacharya2013MultiRobotCoverage}, and serve here as the Euclidean baseline from which the general formulation is derived.

\subsection{Generalizing to Quadratic Transfer Costs}
\label{sec_general_quadratic_costs}
In many applications, the Euclidean distance in Equation~\eqref{eq_voronoi} is interpreted as a transfer cost—e.g., travel time or energy costs between two locations.
To generalize this concept, and include both Equations~\eqref{eq_voronoi} and a family of optimal control problems, it was suggested in \cite{costaControlTheoreticFrameworkVoronoilike2025} to study transfer costs for agent $\ell$ at state $x_\ell$ to a terminal state $x'$ of the form
\begin{equation}
    J_\ell(x') = (x'-\bar x_\ell)^T S_\ell(x'-\bar x_\ell)
               + c_\ell^T(x'-\bar x_\ell) + d_\ell,
\label{eq:transfer_cost}
\end{equation}
where $S_\ell \in \mathbb{R}^{m \times m}$,  $\bar x_\ell=\bar x_\ell(x_\ell)$ is a function of $x_\ell$, and the linear/constant
terms are $c_\ell\in\mathbb{R}^m$ and $d_\ell\in\mathbb{R}$. 
Here $m$ denotes the full state dimension, typically $m=2n$ for 
second-order dynamics (position and velocity), while $n$ is the spatial dimension.

It was shown in \cite{costaControlTheoreticFrameworkVoronoilike2025} that
the pairwise equal-cost manifold between agents $i$ and $j$, $B_{ij}=\{\,x': J_i(x')=J_j(x')\,\}$~\cite{costaControlTheoreticFrameworkVoronoilike2025}, is a quadratic hypersurface,
\begin{equation}
    x'^T M_{ij} x' + l_{ij}^T x' + b_{ij}=0,
\label{eq_quadratic_MLB}
\end{equation}

with coefficients
\begin{align*}
M_{ij} &= S_i - S_j,\\
l_{ij} &= 2(S_j\bar x_j - S_i\bar x_i)+(c_i-c_j),\\
b_{ij} &= \bar x_i^T S_i\bar x_i - \bar x_j^T S_j\bar x_j
          -c_i^T \bar x_i + c_j^T \bar x_j + (d_i-d_j).
\end{align*}

For any pair of agents $i \neq j$, the equal-cost condition defines, in general, a quadratic surface. 
Its geometric profile depends on the relation between the cost matrices $S_i$ and $S_j$.

When the matrices are equal ($S_i = S_j$), the quadratic terms cancel and the boundary becomes affine (a hyperplane in general, or a straight line in two dimensions). Its orientation depends on the common weighting $S_i$: if it is anisotropic, the separating plane is tilted according to the shared directional scaling. When this common weighting is isotropic ($S_i = S_j = \alpha I$), the plane is orthogonal to the line connecting the two agent centers, yielding the classical Voronoi diagram.

When the matrices differ ($S_i \neq S_j$), the boundary becomes a curved quadratic surface whose form—circular, elliptic, parabolic, or hyperbolic in two dimensions—depends on the eigenstructure of $M_{ij}=S_i-S_j$. Circular arcs arise when curvature is isotropic but nonzero, corresponding to unequal scalar weights ($S_i=\alpha_i I$, $S_j=\alpha_j I$). Elliptic boundaries occur when $M_{ij}$ is positive or negative definite, while parabolic and hyperbolic shapes appear when $M_{ij}$ is singular or indefinite.

In practice, for most agent costs, such as LQR-based controllers with isotropic costs, the matrices $S_i$ are symmetric positive definite and differ only slightly across agents. 
Consequently, the boundaries are straight, circular, or gently elliptic—capturing differences in agility or control cost—while parabolic and hyperbolic cases are less common.

\subsection{Adversarial Team Regions}
\label{sec:avr_background}

Voronoi methods have also been extended to competitive team settings \cite{costaOptimizingLocationsOpposing2025}. Given two opposing teams $T$ and $T'$, the Adversarial Voronoi Region (AVR) of team $T$ is the union of its members’ cells,
\cite{costaOptimizingLocationsOpposing2025}:
\[
\mathcal{R}_T = \bigcup_{i\in T} V_i .
\]

Further, the utility of owning a particular point (having it inside $\mathcal{R}_T$) for team $T$ is given by some utility density
 $\phi: \mathbb{R}^n \rightarrow \mathbb{R}$. This makes the aggregated team utility 
\begin{equation}
    H_T = \sum_{i\in T} \int_{V_i} \phi(q)\,dq.
\label{eq_classic_team_utility}
\end{equation}

This provides a quantitative measure of how favorable different team configurations are and opens up the possibility of moving agents along the gradient of $H_T$ to improve team utility.

\subsection{Reynolds Transport Theorem}
The Reynolds Transport Theorem (RTT) below generalizes the Leibniz rule to moving domains \cite{leal2007advanced}. 
For a domain $D(t)\subset\mathbb{R}^n$ with boundary $\partial D(t)$, outward unit normal $\hat n$, boundary velocity $v_b$, and a scalar field $F:\mathbb{R}^n \times \mathbb{R} \rightarrow \mathbb{R}^n$ defined over $D(t)$, the RTT \cite{leal2007advanced} states that

\begin{align}\label{eq:rtt}
\frac{d}{dt}\int_{D(t)} F(q,t)\,dV
&= \int_{D(t)} \tfrac{\partial F}{\partial t}(q,t)\,dV\nonumber\\ 
&+ \int_{\partial D(t)} F(q,t)\,(v_b^T \hat n)\,dS.
\end{align}

In what follows, we combine the cost-based partitioning framework introduced in
\cite{costaControlTheoreticFrameworkVoronoilike2025} and
\cite{costaOptimizingLocationsOpposing2025}, and extend it to the more general CIV formulation. We then use the RTT to derive analytical expressions for how agent utilities vary as the region boundaries move with the agents.  
In this transition, the AVR team utility $H_T$, in (\ref{eq_classic_team_utility}) defined over classical Voronoi regions, becomes a dynamics-aware team utility $U_T$ in (\ref{eq:team_utility}) defined over CIV regions. 

\section{PROPOSED APPROACH}
To provide a motivating special case for the CIV formulation below, we first derive transfer costs for a double integrator with linear drag and show they are quadratic, inducing curved, non-Euclidean Voronoi boundaries. We then define CIV regions in position space for general transfer costs and, using the RTT, obtain boundary integral formulas for the gradients of agent and team utilities.

\subsection{LQR cost for Double Integrator with Drag}
\label{sec:lqr_drag}

\begin{lemma}[LQR transfer cost]\label{lem:lqr_cost}
    Consider the motion model of a planar double integrator with linear drag,
\begin{equation}
    \ddot p = u - a v, 
\end{equation}
where the position, velocity, and control input (acceleration) are
$p,v,u \in \mathbb{R}^2$, respectively, and $a \in \mathbb{R}_+$ is the linear drag coefficient. Let $x = (p,v)$ be the combined state, and the transfer cost of some trajectory $x(t)$  with corresponding control $u(t)$ from $x_0$ to the origin be given by
\begin{equation}
    J(x_0)=\int_0^\infty \bigl(x(t)^T x(t) + r u(t)^T u(t)\bigr)\,dt,
    \label{eq:cost_lqr}
\end{equation}
with $r\in \mathbb{R}_+$ determining the tradeoff between control effort and state error. Then the corresponding optimal transfer cost is
\begin{equation}
    J^*(x) 
    = k_p \|p\|^2 + 2 k_{pv} v^T p + k_v \|v\|^2,
    \label{eq:lqr_J}
\end{equation}
where
\begin{align}
    & k_p = a \sqrt{r} + \tfrac{k_v}{\sqrt{r}},\nonumber \\
    & k_{pv} = \sqrt{r},\nonumber\\
    & k_v = -a r + \sqrt{a^2 r^2 + r(2\sqrt{r}+1)},\nonumber
\end{align}

Since the dynamics is translationally invariant with respect to $p$,  the optimal transfer cost to an arbitrary terminal position $q \in \mathbb{R}^2$ is
\begin{equation}
    J^*(q,x) 
    \;=\; k_p \|p-q\|^2 + 2 k_{pv}\, v^T (p-q) + k_v \|v\|^2,
    \label{eq:lqr_J_pprime}
\end{equation}
assuming deviations from $q$ are penalized in (\ref{eq:cost_lqr}).

The pairwise equal-cost manifold between two agents $x_i, x_j$, i.e., $\{q: J^*(q,x_i)=J^*(q,x_j)\}$ is given by a quadratic hypersurface.
\end{lemma}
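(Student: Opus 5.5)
The plan is to reduce the problem to a standard infinite-horizon LQR problem, solve the algebraic Riccati equation (ARE) in closed form, and then read off the equal-cost manifold in the form of~\eqref{eq_quadratic_MLB}. In state-space form, with $x=(p,v)$, the dynamics are $\dot x = Ax + Bu$, where
\begin{equation*}
A=\begin{pmatrix}0 & I\\ 0 & -aI\end{pmatrix},\qquad B=\begin{pmatrix}0\\ I\end{pmatrix}.
\end{equation*}
The cost~\eqref{eq:cost_lqr} has weights $Q=I$ and $R=rI$. The pair $(A,B)$ is controllable and $Q>0$. Standard LQR theory then gives $J^*(x_0)=x_0^T P x_0$, where $P$ is the unique symmetric positive definite (stabilizing) solution of
\begin{equation*}
A^TP+PA-\tfrac{1}{r}PBB^TP+I=0.
\end{equation*}
Since $A$, $B$, $Q$ and $R$ all act identically and independently on the two planar coordinates, I will use the ansatz $P=K\otimes I$ with
\begin{equation*}
K=\begin{pmatrix}k_p & k_{pv}\\ k_{pv} & k_v\end{pmatrix}.
\end{equation*}
This reduces the ARE to the scalar system with $A=\begin{pmatrix}0&1\\0&-a\end{pmatrix}$ and $B=(0,1)^T$. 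Expanding $x^TPx$ with this $P$ gives exactly the form~\eqref{eq:lqr_J}.

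Next I will write out the three independent entries of the $2\times2$ ARE, using $KB=(k_{pv},k_v)^T$.
\begin{itemize}
\item[(1,1)] $1-k_{pv}^2/r=0$, so $k_{pv}=\pm\sqrt r$.
\item[(1,2)] $k_p-a k_{pv}-k_{pv}k_v/r=0$, so $k_p=a\sqrt r+k_v/\sqrt r$ once $k_{pv}=\sqrt r$.
\item[(2,2)] $2k_{pv}-2ak_v-k_v^2/r+1=0$, i.e.\ $k_v^2+2ar\,k_v-r(2\sqrt r+1)=0$. Its roots are $k_v=-ar\pm\sqrt{a^2r^2+r(2\sqrt r+1)}$.
\end{itemize}

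The main obstacle is selecting the correct branch of these roots, namely the one giving the stabilizing, positive definite $P$. I will argue as follows. Positivity of $k_v$ (needed since $K>0$) forces the $+$ sign in $k_v$. This uses the fact that the constant term $-r(2\sqrt r+1)$ is negative, so the roots have opposite signs. For $k_{pv}$, I will check that taking $k_{pv}=\sqrt r$ yields $k_pk_v-k_{pv}^2>0$, i.e.\ $ak_v\sqrt r+k_v^2/\sqrt r>r$. This should follow from the (2,2) equation, which gives $k_v^2=r(2\sqrt r+1)-2ark_v$, so $k_v^2/\sqrt r+ak_v\sqrt r=2r+\sqrt r-ak_v\sqrt r$. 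I will then verify that this exceeds $r$, or alternatively check directly that the closed-loop matrix $A-\tfrac1r BB^TP$ is Hurwitz. If the sign choice $k_{pv}=-\sqrt r$ gives a non-stabilizing solution, as expected from the standard double-integrator case, uniqueness of the stabilizing solution settles it.

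For~\eqref{eq:lqr_J_pprime}, I will substitute the shifted state $\tilde x=(p-q,v)$. The dynamics are invariant under $p\mapsto p-q$ because $A$ does not act on position through a constant term, and the cost penalizes $\tilde x$. Finally, writing $J^*(q,x_i)=J^*(q,x_j)$ and expanding in $q$ gives
\begin{equation*}
(k_p^i-k_p^j)\|q\|^2-2\bigl(k_p^ip_i-k_p^jp_j+k_{pv}^iv_i-k_{pv}^jv_j\bigr)^Tq+\mathrm{const}=0.
\end{equation*}
This is a quadratic hypersurface of the form~\eqref{eq_quadratic_MLB}, with $M_{ij}=(k_p^i-k_p^j)I$, consistent with Section~\ref{sec_general_quadratic_costs}. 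It degenerates to an affine boundary when $k_p^i=k_p^j$.
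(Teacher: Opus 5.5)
Your proposal is correct and follows essentially the same route as the paper: the block structure $P=K\otimes I$ from the planar symmetry, the ARE solved for $k_p,k_{pv},k_v$, translation invariance to shift by $q$, and the observation that the equal-cost condition is quadratic in $q$. You fill in what the paper leaves implicit. First, the branch selection, which closes at once because the closed-loop polynomial $s^2+(a+k_v/r)s+1/\sqrt{r}$ has positive coefficients (alternatively, $ak_v<\sqrt{r}+\tfrac12$ gives $k_pk_v>r$). Second, the explicit $M_{ij}=(k_p^i-k_p^j)I$, which shows the boundaries are in fact circles or lines.
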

\begin{proof}
First, (\ref{eq:lqr_J}) follows from the solution of the algebraic Riccati equation for the infinite-horizon LQR problem. 
Due to the rotational symmetry of the planar dynamics and cost, the Riccati matrix $P$ has the block structure
\[
P =
\begin{bmatrix}
k_p I_2 & k_{pv} I_2 \\
k_{pv} I_2 & k_v I_2
\end{bmatrix},
\]
which yields the coefficients $k_p$, $k_{pv}$, and $k_v$.

Then (\ref{eq:lqr_J_pprime}) is a pure translation of (\ref{eq:lqr_J}) in position, since the motion model is translationally invariant with regards to position.
Thus, the cost is on the form of (\ref{eq_quadratic_MLB}) and the quadratic hypersurface boundaries follow.
\end{proof}

\begin{remark}
By varying the parameters $r$ (control cost) and $a$ (drag) we have a fairly flexible family of agent models taking initial velocity into account, while having a simple closed form cost.
One can also use a finite time version of LQR theory, but that leads to increased complexity in terms of a time dependence in the $P$-matrix.
\end{remark}

The model above is used as a special case (LQR-CIV) of the general 
CIV framework described below.

\subsection{Cost-Induced Voronoi Regions}

We now present a generalization of the Voronoi regions, where the agent state spaces are separated from the position space.

\begin{definition}[Cost-Induced Voronoi Regions]
    Let there be $N$ agents, each specified by a parameter vector $x_i\in \mathbb{R}^m$, and let $d_i: \mathbb{R}^n \times \mathbb{R}^m \rightarrow \mathbb{R}$ denote a differentiable transfer cost for agent $i$ in state $x_i$ to reach a position $q \in \mathbb{R}^n$.
    For each agent $i$, define its \emph{cost advantage function}
    \begin{equation}
    g(q, x_i) := \min_{j \neq i} \bigl[d_j(q, x_j) - d_i(q, x_i)\bigr],
    \end{equation}
    which is positive wherever the transfer of agent $i$ is less costly than all competitors.  
    The corresponding Cost-Induced Voronoi (CIV) region is then    

\begin{align}
    G_i(x_i) 
    &= \bigl\{q \in \mathbb{R}^n \mid d_i(q,x_i) \le d_j(q,x_j), \ \forall j \ne i \bigr\} \label{eq:gen_voronoi_definition}\\
    &= \bigl\{q \in \mathbb{R}^n \mid g(q,x_i) \ge 0\bigr\},
\end{align}
i.e., the set of all points where agent $i$ has a nonnegative cost advantage.
\end{definition}

\begin{remark}
If $n = m$ and $d(q,x) = \|q - x\|$, the regions $G_i$ reduce to the classical Euclidean Voronoi cells. 
\end{remark}

\begin{definition}[LQR-CIV]
Let the LQR-CIV regions be given by $G_i(x_i)$, when the cost is
$d(q,x)=J^*(q,x)$ from Equation (\ref{eq:lqr_J_pprime}),
with $q \in \mathbb{R}^2$ and 
    \(x_i = (p_i, v_i) \in \mathbb{R}^4\).
\end{definition}

Examples of LQR-CIV regions can be seen in Fig.~\ref{fig:casestudy} and Section~\ref{sec:results}.
Following the discussion in Section \ref{sec_general_quadratic_costs},
in general, $G_i$ have quadratic boundaries. But 
when all agents share the same drag/cost parameters $p,r$ we get ($S_i = S_j$), 
and the pairwise boundaries become straight lines.

\subsection{The Agent Utility and its Gradient}

\begin{definition}[Agent utility]
Given a spatial utility density $\phi: \mathbb{R}^n \rightarrow \mathbb{R}$,  
the agent utility is defined as
\begin{equation}
U_i(x_i) = \int_{G_i(x_i)} \phi(q)\,dq,
\end{equation}
i.e., the total accumulated utility over its CIV region $G_i(x_i)$.
\end{definition}

In order to optimize the agent utility, we are interested in the gradient $\nabla_{x_i} U_i(x_i)\equiv \frac{\partial U_i(x_i)}{\partial x_i}$. To compute it in Lemma~\ref{lem_gradient} below, we first note that $G_i(x_i)$ varies with the agent state through its moving boundary. We therefore first relate boundary motion to changes in $x_i$.

\begin{lemma}[Boundary velocity]
\label{lem_boundary_velocity}
Let an agent follow a trajectory $x_i(t)\in\mathbb{R}^m$, and let 
$D(t)=G_i(x_i(t))\subset\mathbb{R}^n$ denote its CIV region at time $t$.
Consider a point $q(t)$ on the moving boundary $\partial D(t)$, satisfying
$g(q(t),x_i(t))=0$ for all $t$.
Let $v_b=\dot q$ be the boundary velocity and $\hat n_i$ the outward normal of $D(t)$.
Then,
\begin{equation}
    \hat n_i(q) = -\,\frac{\nabla_q g(q,x_i)}{\|\nabla_q g(q,x_i)\|}, 
    \quad
    v_b^T \hat n_i
    = \frac{\nabla_{x_i} g(q,x_i)^T\,\dot x_i}{\|\nabla_q g(q,x_i)\|}. \label{eq:bound_vel}
\end{equation}
\end{lemma}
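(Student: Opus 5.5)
The plan is to differentiate the boundary identity $g(q(t),x_i(t))=0$ in time, and to read off the normal from the level-set structure of $g$. I will work locally near a boundary point where the minimizing competitor $j$ in the definition of $g$ is unique. There $g(q,x_i)=d_j(q,x_j)-d_i(q,x_i)$ is differentiable in both arguments, by the differentiability assumed in the CIV definition. I also assume the regularity condition $\nabla_q g(q,x_i)\neq 0$. Points where two or more competitors tie (junctions of several boundary pieces) form a lower-dimensional set. They are handled by this local argument on each smooth piece, and they carry zero surface measure in the later RTT integral.

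\textbf{Step 1 (outward normal).} By the CIV definition, $D(t)=\{q: g(q,x_i(t))\ge 0\}$, so $\partial D(t)$ lies in the zero level set of $g(\cdot,x_i(t))$. By the implicit function theorem and $\nabla_q g\neq 0$, this level set is locally a smooth hypersurface, and $\nabla_q g$ is normal to it. The gradient points toward increasing $g$, which is into $D(t)$, where $g>0$. The outward unit normal is therefore $\hat n_i=-\nabla_q g/\|\nabla_q g\|$, which is the first formula in \eqref{eq:bound_vel}.

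\textbf{Step 2 (normal velocity).} Differentiating $g(q(t),x_i(t))=0$ with the chain rule gives
\[
\nabla_q g(q,x_i)^T\dot q+\nabla_{x_i} g(q,x_i)^T\dot x_i=0 .
\]
Substituting $v_b=\dot q$ and dividing by $\|\nabla_q g\|$ gives
\[
-\frac{\nabla_q g^T}{\|\nabla_q g\|}v_b=\frac{\nabla_{x_i}g^T\dot x_i}{\|\nabla_q g\|}.
\]
The left-hand side is exactly $v_b^T\hat n_i$ by Step 1, which yields the second formula.

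\textbf{Main obstacle.} The tangential component of $v_b$ is not determined by this argument: a boundary point can slide along the curve in many ways. The statement is well posed only because it concerns the normal component $v_b^T\hat n_i$, which is all that enters the RTT \eqref{eq:rtt}. I would remark that any differentiable choice of the trajectory $q(t)$ on the boundary gives the same normal speed, so the quantity is intrinsic.

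The other delicate point is nondifferentiability of the $\min$ in $g$ where competitors tie. I would treat it by restricting to the open, dense subset of the boundary where the active competitor $j$ is unique and constant in a neighborhood. On that subset $\nabla_{x_i} g=-\nabla_{x_i}d_i$, since $x_j$ is held fixed.
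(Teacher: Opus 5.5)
Your proof is correct and follows the same approach as the paper. You identify the outward normal as $-\nabla_q g/\|\nabla_q g\|$ because $g\ge 0$ on $D(t)$, then differentiate $g(q(t),x_i(t))=0$ by the chain rule and divide by $\|\nabla_q g\|$; your added regularity assumptions (a unique active competitor and $\nabla_q g\neq 0$) and your remark that only the normal component of $v_b$ is determined are sensible refinements that the paper leaves implicit.
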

\vspace{3mm}
\begin{proof}
First, $D(t)$ is given by $g\geq 0$, thus the gradient $\nabla_q g$ points inwards. Multiplying by $-1$ and normalizing we get the left part of (\ref{eq:bound_vel}).
Next, note that we have $g(q(t),x_i(t))=0$ along the boundary. Differentiating w.r.t. time gives
\[
0=\nabla_q g(q,x_i)^T\,\dot q + \nabla_{x_i} g(q,x_i)^T\,\dot x_i.
\]
With $v_b=\dot q$ and $\hat n_i=-\nabla_q g/\|\nabla_q g\|$,
we get
$$v_b^T \hat n_i=\hat n_i^T v_b = -\nabla_q g^T \dot q/\|\nabla_q g\|= \nabla_{x_i} g^T\,\dot x_i/\|\nabla_q g\|,$$ where we have used that the transpose of a scalar is a scalar.

\end{proof}

\begin{remark}[Interpretation of Lemma~\ref{lem_boundary_velocity}]
\label{rem:boundary_sensitivity}

We will look at two simple examples of boundary motion in terms of $v_b^T \hat n_i$.
In both examples $\mathbb{R}^n=\mathbb{R}^m$, i.e., the spatial point $q$ and the agent state $x_i$ have equal dimension.

First, consider Fig.~\ref{fig:boundaries}(a).
We have a circular boundary of $G_i$ resulting from $d_i(q,x_i)=\|q - x_i\|^2$ and $d_j(q,x_j)=C, \ \forall j\neq i$ constant.
Clearly, $\nabla_{x_i} g(q,x_i)$ points radially outward from the circle. Consider an agent motion $\dot x_i$ to the north (blue arrow). The scalar product $\nabla_{x_i} g(q,x_i)^T \dot x_i$ (the boundary motion given by (\ref{eq:bound_vel})) is then positive in the direction of the motion (north), zero in the orthogonal directions (east/west), and negative in the opposite direction. Thus the circle follows the agent, as expected (dashed circle in the figure).

Second, consider Fig.~\ref{fig:boundaries}(b).
Here we have classical Voronoi regions, where all costs are 
$d_i(q,x_i)=\|q-x_i\|^2,\ \forall i$.

The motion of the boundary depends on the direction of the agent velocity $\dot x_i$.
If $\dot x_i$ is orthogonal to the boundary $\partial D$, as indicated by the blue arrow,
the boundary translates uniformly to the right, as illustrated by the blue dashed line.
If instead the agent moves tangentially along the boundary, as indicated by the green arrow,
points of the boundary in the direction of motion move outward while points in the opposite direction move inward.
Thus the boundary pivots around the point of the line closest to $x_i$, as illustrated by the green dashed line.

The analytical expression for the boundary velocity will be derived later in
(\ref{eq:gamma}) and (\ref{eq:gamma_classical_voronoi}).
\end{remark}

\begin{figure}
  \vspace{5pt} 
  \centering
  \includegraphics[width=0.9\linewidth, trim={0 12cm 0 0},clip]{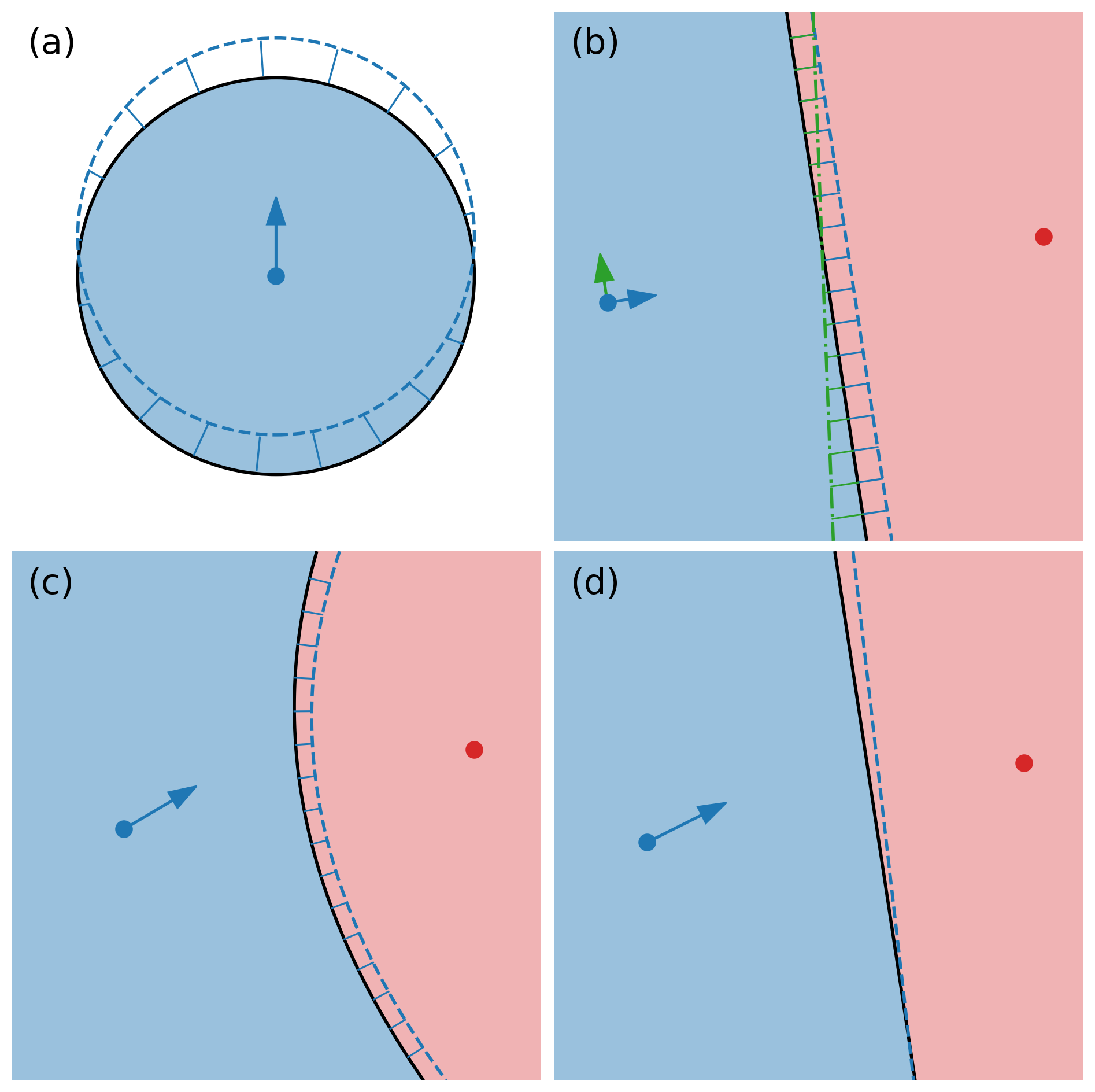}
  \caption{
  Illustration of CIV boundary shifts $v_b^T \hat n_i$ as a function of $\dot x_i$, in Equation (\ref{eq:bound_vel}) for two special cases, described in Remark \ref{rem:boundary_sensitivity}.
  (a) If all other agents have constant costs, while one has radially increasing, the boundary $\partial D$ is circular, following the agent $x$.
  (b) For classical Voronoi boundaries, $\partial D$ moves uniformly (translates) when the agent moves towards the other agent (blue arrow and lines), but pivots if the agent moves along the boundary (green arrow and lines).}
  \label{fig:boundaries}
\end{figure}

Having established how the boundary moves with the agent state, we now use the RTT to derive how this motion affects the agent utility.

\begin{lemma}[Utility gradient]
\label{lem_gradient}
Let $D(t) = G_i(x_i(t)) \subset \mathbb{R}^n$. If the density $\phi(q)$ is time-invariant, then
\begin{equation}
    \nabla_{x_i} U_i(x_i)
    \;=\; \int_{\partial D(t)} \phi(q)\, \gamma_i(q)\, dS,
\end{equation}
where
\begin{equation}
    \gamma_i(q) = \frac{\nabla_{x_i} g(q,x_i)}{\|\nabla_q g(q,x_i)\|}.
    \label{eq:gamma}
\end{equation}
\end{lemma}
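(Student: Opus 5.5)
The argument reduces the gradient to a directional derivative along an arbitrary agent trajectory and then combines the Reynolds Transport Theorem (\ref{eq:rtt}) with Lemma~\ref{lem_boundary_velocity}. Fix an arbitrary differentiable trajectory $x_i(t)$ through the point of interest, with arbitrary velocity $\dot x_i$, and set $D(t)=G_i(x_i(t))$. By the chain rule,
\[
\frac{d}{dt}U_i(x_i(t))=\nabla_{x_i}U_i(x_i)^T\dot x_i .
\]
It therefore suffices to show that the time derivative equals $\bigl(\int_{\partial D(t)}\phi\,\gamma_i\,dS\bigr)^T\dot x_i$ for every $\dot x_i$. Equality for all directions then identifies the gradient.

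First, apply (\ref{eq:rtt}) with $F(q,t)=\phi(q)$. Since $\phi$ is time-invariant, $\partial F/\partial t=0$ and the volume term vanishes. This leaves
\[
\frac{d}{dt}U_i=\int_{\partial D(t)}\phi(q)\,(v_b^T\hat n_i)\,dS .
\]
Second, substitute the normal boundary velocity from (\ref{eq:bound_vel}), namely $v_b^T\hat n_i=\nabla_{x_i}g^T\dot x_i/\|\nabla_q g\|=\gamma_i(q)^T\dot x_i$. Because $\dot x_i$ does not depend on $q$, it can be pulled out of the boundary integral. This gives
\[
\frac{d}{dt}U_i=\Bigl(\int_{\partial D(t)}\phi\,\gamma_i\,dS\Bigr)^T\dot x_i .
\]
Comparing this with the chain-rule expression for all $\dot x_i$ yields the claim.

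The main obstacle is justifying the use of RTT and Lemma~\ref{lem_boundary_velocity} along the whole of $\partial D$. Lemma~\ref{lem_boundary_velocity} needs $\nabla_q g\neq 0$ on the boundary. The function $g$ is a minimum over $j$, so it is only piecewise differentiable. At points where two or more competitors $j$ attain the minimum (the vertices of the CIV region), the gradients $\nabla_q g$ and $\nabla_{x_i}g$ are undefined. I would argue that these points form a set of zero surface measure, a finite set of points in $\mathbb{R}^2$. On each smooth boundary piece $B_{ij}$, $g$ coincides with $d_j-d_i$ locally, so the pieces can be integrated separately and the vertices do not contribute.

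I would also state the required regularity explicitly. We need nondegeneracy $\nabla_q g\neq 0$ on the boundary, so that the boundary is locally a $C^1$ hypersurface by the implicit function theorem, and we need $D$ bounded, or $\phi$ integrable with suitable decay, so that RTT applies. Finally, the boundary must depend smoothly on $x_i$, which follows from the differentiability of $d_i$ and the implicit function theorem.
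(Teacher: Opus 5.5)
Your proposal is correct and follows the paper's proof step for step: the chain rule along an arbitrary trajectory, RTT with the volume term vanishing because $\phi$ is time-invariant, substitution of the normal boundary velocity from Lemma~\ref{lem_boundary_velocity}, and identification of the gradient from the arbitrariness of $\dot x_i$. Your regularity remarks go beyond the paper, which leaves them implicit; one small correction is that $\nabla_{x_i} g = -\nabla_{x_i} d_i(q,x_i)$ exists even at tie points, since the minimum over $j$ does not involve $x_i$, so only $\nabla_q g$ fails to exist there.
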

\vspace{3mm}
\begin{proof}
Let the agent state evolve along a trajectory $x_i(t)\in\mathbb{R}^m$.  
By the RTT in Equation~\eqref{eq:rtt}, the time derivative of its utility is given by
\begin{equation}
    \frac{d}{dt} U_i(x_i(t))
     = \int_{D(t)} \tfrac{\partial \phi}{\partial t}\, dV
       + \int_{\partial D(t)} \phi(q)\,(v_b^T \hat{n}_i)\, dS.
    \label{eq:rtt_expression}
\end{equation}

Since $\phi$ is time-invariant, the first term vanishes.  
Substituting the boundary velocity from Lemma~\ref{lem_boundary_velocity}
into Equation~\eqref{eq:rtt_expression} yields
\begin{align}
    \frac{d}{dt} U_i(x_i(t))
    &= \nabla_{x_i}U_i(x_i)^T\,\dot{x}_i \nonumber\\
    &=
    \Biggl(
        \int_{\partial D(t)} 
        \phi(q)\,
        \frac{\nabla_{x_i} g(q,x_i)}{\|\nabla_q g(q,x_i)\|}\, dS
    \Biggr)^T\dot{x}_i.
    \label{eq:last_boundary_form}
\end{align}

Since $\dot{x}_i$ is arbitrary, the term in parentheses must equal 
$\nabla_{x_i} U_i(x_i)$, proving the result.
\end{proof}

\begin{corollary}[Classical Voronoi case]
If $\mathbb{R}^n=\mathbb{R}^m$ and the transition cost is given by
$d(q,x)=\|q-x\|^2$ (squared form, which yields the same
partitions as $\|q-x\|$ but simplifies the algebra), the  factor $\gamma$ in
Lemma~\ref{lem_gradient} reduces, on the common boundary
$\partial V_i\cap\partial V_j$, to
\begin{equation}
    \gamma_i(q)=\frac{q-x_i}{\|x_i-x_j\|}.
    \label{eq:gamma_classical_voronoi}
\end{equation}
\end{corollary}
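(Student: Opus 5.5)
The plan is to specialize the formula for $\gamma_i$ in Lemma~\ref{lem_gradient} directly. On the common boundary $\partial V_i\cap\partial V_j$, away from points where three or more cells meet, the minimum in the cost advantage function is attained by the single competitor $j$. Since all costs are continuous, a neighborhood of such a boundary point has $g(q,x_i)=d_j(q,x_j)-d_i(q,x_i)$ locally. The function $g$ is therefore differentiable there in both $q$ and $x_i$, and its gradients are those of this single difference.

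Next I would compute the two gradients with $d(q,x)=\|q-x\|^2$:
\begin{align*}
\nabla_{x_i} g(q,x_i) &= -\nabla_{x_i}\|q-x_i\|^2 = 2(q-x_i),\\
\nabla_q g(q,x_i) &= 2(q-x_j)-2(q-x_i) = 2(x_i-x_j).
\end{align*}
Note that $\nabla_q g$ does not depend on $q$, since the quadratic terms cancel; this is the case $S_i=S_j$ from Section~\ref{sec_general_quadratic_costs}. Its norm is $2\|x_i-x_j\|$.

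Substituting into (\ref{eq:gamma}), the factors of $2$ cancel and give $\gamma_i(q)=(q-x_i)/\|x_i-x_j\|$, which is (\ref{eq:gamma_classical_voronoi}).

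I would also justify the use of the squared cost. Since $t\mapsto t^2$ is strictly increasing on $[0,\infty)$, the inequalities $\|q-x_i\|\le\|q-x_j\|$ and $\|q-x_i\|^2\le\|q-x_j\|^2$ are equivalent, so the regions $G_i$ coincide with the $V_i$ of (\ref{eq_voronoi}). The sets $G_i(x_i)$ are then the same for either cost, so $U_i$ and its gradient are the same, and Lemma~\ref{lem_gradient} may be applied with whichever cost is convenient. The function $\gamma_i$ itself does depend on the choice of $g$, but the boundary integral does not.

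The main obstacle is the nondifferentiability of the $\min$ at Voronoi vertices, where several competitors tie. I would argue that these points form a set of measure zero on $\partial D$ in the $(n-1)$-dimensional surface measure; in the plane they are finitely many points. They therefore do not affect the boundary integral. Distinct agents ($x_i\neq x_j$) guarantee $\nabla_q g\neq 0$, so the formula is well defined on the edges.

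As a sanity check, I would compare the result with Remark~\ref{rem:boundary_sensitivity}. Motion along $x_j-x_i$ translates the bisector uniformly, while tangential motion makes $(q-x_i)^T\dot x_i$ change sign at the foot of the perpendicular from $x_i$, so the bisector pivots there.
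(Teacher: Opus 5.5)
Your proposal is correct and is essentially the paper's proof: on the boundary between $V_i$ and $V_j$ you take $g=\|q-x_j\|^2-\|q-x_i\|^2$, compute $\nabla_{x_i}g=2(q-x_i)$ and $\nabla_q g=2(x_i-x_j)$, and substitute into (\ref{eq:gamma}); your remarks on the unique minimizer away from vertices, the measure-zero vertex set, and the squared versus unsquared cost are sound justifications that the paper leaves implicit. One small correction to an aside: on the boundary, $\gamma_i$ does \emph{not} depend on which equivalent cost you use (with $d=\|q-x\|$, both gradients are scaled there by the same factor $1/(2\|q-x_i\|)$), which is expected because $\gamma_i^T\dot x_i=v_b^T\hat n_i$ is the geometric normal speed of the boundary by Lemma~\ref{lem_boundary_velocity}.
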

\vspace{3mm}
\begin{proof}
Consider the boundary separating $V_i$ and $V_j$, defined by the level set
\begin{align}
    g(q,x_i) = d(q,x_j) - d(q,x_i) 
             = \|q - x_j\|^2 - \|q - x_i\|^2 = 0.
\end{align}
Then, the gradients are
$\nabla_{x_i} g = 2(q - x_i)$, 
$\nabla_{q} g   = 2(q - x_j) - 2(q - x_i) = 2(x_i - x_j)$.
Substituting into \eqref{eq:gamma} yields
\[
    \gamma_i(q) = \frac{2(q - x_i)}{\|2(x_i - x_j)\|}
                = \frac{q - x_i}{\|x_i - x_j\|}.
\]
\end{proof}

\begin{remark}
This result confirms that Lemma~\ref{lem_gradient} includes the classical Voronoi formulation as a special case. 
Specifically, for the Euclidean cost $d(q,p) = \|q - p\|^2$ and a constant so-called revenue factor $f \equiv 1$, 
the boundary expression reduces to the gradient formulation of classical Voronoi regions 
previously derived in Lemma~4.12 of~\cite{costaOptimizingLocationsOpposing2025}.
\end{remark}

Now we divide all agents into two adversarial teams, to investigate the combined team utility, and the gradient of that utility, as was done in \cite{costaOptimizingLocationsOpposing2025}, reviewed in Section~\ref{sec:avr_background}, but this time with the more general CIV regions.

\begin{definition}[Team utility]
Let $\phi : \mathbb{R}^n \rightarrow \mathbb{R}$ denote a spatial utility density,
and let $T \subset \{1, \ldots, N\}$ be the team of interest.  
The team utility is defined as the sum of the individual agent utilities within the team:
\begin{equation}
    U_T(X_T)
    = \sum_{i \in T} \int_{G_i(x_i)} \phi(q)\,dq, \label{eq:team_utility}
\end{equation}
where $X_T = \{x_i : i \in T\}$ denotes the set of agent states.
\end{definition}

Note that $\mathcal R_T = \bigcup_{i\in T} G_i(x_i)$ generalizes the AVR from Section~\ref{sec:avr_background}: here each $G_i$ is defined by a general cost rather than Euclidean distance, making $\mathcal R_T$ a dynamics-aware team partition. Given the team utility, we now consider how it varies with the collective state $X_T$ of the agents in the team.

\begin{lemma}[Gradient of Team Utility]
Let $L=\partial\mathcal R_T\cap\partial\mathcal R_{T'}$ denote the inter-team boundary.
Then, for any agent $i\in T$,
\begin{equation}
    \nabla_{x_i} U_T
    \;=\;
    \int_{\partial G_i \cap L}
    \phi(q)\,
    \frac{\nabla_{x_i} g(q,x_i)}
         {\|\nabla_q g(q,x_i)\|}\,dS . \label{eq:team_utility_gradient}
\end{equation}
\end{lemma}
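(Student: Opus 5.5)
We prove the claim by applying Lemma~\ref{lem_gradient} to every agent of team $T$ and tracking how the boundary contributions combine. Fix $i\in T$ and differentiate $U_T=\sum_{k\in T}U_k$ with respect to $x_i$. For $k\neq i$ the state $x_k$ is fixed, but $G_k$ still depends on $x_i$ through any boundary piece it shares with $G_i$. We therefore apply the RTT in (\ref{eq:rtt}) to each $G_k$, $k\in T$, along a trajectory where only $x_i$ varies. Because $\phi$ is time-invariant, only boundary terms survive:
\[
\tfrac{d}{dt}U_T=\sum_{k\in T}\int_{\partial G_k}\phi(q)\,(v_b^T\hat n_k)\,dS .
\]
Moving $x_i$ moves only those boundary pieces on which $d_i$ is active, namely $\partial G_i\cap\partial G_j$ for $j\neq i$. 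All other pieces of $\partial G_k$ have $v_b^T\hat n_k=0$.

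Next we split $\partial G_i$ into intra-team pieces $\partial G_i\cap\partial G_k$ with $k\in T$, and inter-team pieces $\partial G_i\cap\partial G_j$ with $j\in T'$, whose union is $\partial G_i\cap L$.

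\textbf{Intra-team pieces.} A point $q$ on such a piece moves with a single velocity $v_b$. The outward normals of the two regions are opposite there, $\hat n_k=-\hat n_i$. Hence the contributions $\phi\,v_b^T\hat n_i$ to $\dot U_i$ and $\phi\,v_b^T\hat n_k$ to $\dot U_k$ cancel pointwise. Intuitively, area that $G_i$ gains from a teammate is lost by that teammate, so $U_T$ is unchanged.

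\textbf{Inter-team pieces.} Here the neighboring region belongs to $T'$ and does not enter $U_T$, so nothing cancels. The contribution is exactly the one in Lemma~\ref{lem_gradient}, restricted to $\partial G_i\cap L$. On such a piece the active minimizer in $g(q,x_i)$ is the opponent $j$. Lemma~\ref{lem_boundary_velocity} then gives $v_b^T\hat n_i=\nabla_{x_i}g^T\dot x_i/\|\nabla_q g\|$. Substituting this and using that $\dot x_i$ is arbitrary, exactly as in the proof of Lemma~\ref{lem_gradient}, yields (\ref{eq:team_utility_gradient}).

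The main obstacle is making the intra-team cancellation rigorous. First, one must check that the boundary velocity of the shared piece is well defined and identical whether it is viewed as part of $\partial G_i$ (via $g(\cdot,x_i)$) or of $\partial G_k$ (via $g(\cdot,x_k)$). To do this we would write both level sets as the single surface $d_i(q,x_i)-d_k(q,x_k)=0$. Then $\nabla_q g(\cdot,x_k)=-\nabla_q g(\cdot,x_i)$, which gives opposite normals and the same normal speed. Second, one must handle junction points where three or more regions meet and $g$ is non-differentiable because the minimum switches. These points form a set of measure zero in the $(n-1)$-dimensional boundary measure, so we would simply assume generic configurations in which pieces meet transversally, and the integrals are unaffected.
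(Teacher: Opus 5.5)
Your proposal is correct and follows the same route as the paper. Like the paper, you apply the RTT boundary formula of Lemma~\ref{lem_gradient} across the team and cancel the contributions on boundaries shared by teammates, because their outward normals are opposite, so only the pieces $\partial G_i\cap L$ remain. You also make explicit two points the paper's proof leaves implicit: that a teammate's utility $U_k$, $k\neq i$, depends on $x_i$ through $\partial G_i\cap\partial G_k$, and that writing this shared piece as the single level set $d_i-d_k=0$ gives equal and opposite normal speeds.
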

\vspace{3mm}
\begin{proof}
By Lemma~\ref{lem_gradient}, $\nabla_{x_i}U_i$ depends only on inter-agent boundaries.
For agents $i,j\in T$, (in the same team) the contributions across their shared boundary
cancel because of opposite normal orientations.  
Thus only boundaries separating $T$ and $T'$ contribute to the total variation of $U_T$, leading to the expression above.
\end{proof}

Together, these results establish how partition boundaries induced by general transfer costs determine the evolution of agent and team utilities. 

\section{NUMERIC COMPUTATIONS AND EXAMPLE}
\label{sec:results}

In this section, we first compare numerically computing the utility gradient using the new result in Equation (\ref{eq:team_utility_gradient}) with a baseline using a finite-difference approximation and Equation (\ref{eq:team_utility}).
Then we look at the soccer example in Fig.~\ref{fig:casestudy}. 

\subsection{Computing the utility gradient}
\label{sec:verification}

Two different methods were implemented and compared: a finite-difference baseline that re-evaluates the team utility under small perturbations, and a contour-integral approach that applies the derived expression explicitly along the estimated boundary. 

\subsubsection{Finite-difference (FD) baseline}
The baseline method applies a uniform grid over $\mathbb{R}^n$ and approximates the Voronoi regions $G_i$ by computing (\ref{eq:gen_voronoi_definition}) for each grid cell.
Then the team utility can be approximated by summation using (\ref{eq:team_utility}).
Finally, to estimate the team utility gradient $\nabla_{x_i} U_T$, we can do a finite-difference approximation of the gradient, by making small perturbations of $x_i$, computing the differences in team utility, and dividing by the size of the perturbations.

\subsubsection{Contour-integral (CI) method}
A straightforward way of using the results in (\ref{eq:team_utility_gradient}) is to first estimate $G_i$ using a grid as above, and find an approximation to the boundaries $\partial G_i \cap L$ directly on this grid. The boundary is extracted as the set of grid edges whose neighboring cells belong to different teams, forming a closed contour that represents the inter-team boundary. 
Finally, we approximate the integral in (\ref{eq:team_utility_gradient}) by aggregating the values of the integrand along the approximated boundary.

\subsubsection{Comparison}
Table~\ref{tab:verification} summarizes gradients and runtimes across the three configurations in Fig.~\ref{fig:casestudy}. 
In all cases, the CI implementation produced gradients that closely matched those of the finite-difference method, with relative differences below $2.5\%$ for both position and velocity components.
The computation times were quite different, with FD requiring approximately $1.7$~s per configuration, while CI completed in under $80$~ms, thus achieving comparable accuracy at roughly 25 times lower runtime. 

\begin{table}
\vspace{4pt} 
\centering
\caption{Comparison of Finite-difference (FD) and contour-integral (CI) across the configurations of Fig.~\ref{fig:casestudy}.}

\label{tab:verification}
\renewcommand{\arraystretch}{1.1}
\setlength{\tabcolsep}{4pt}
\begin{tabular}{lcccc}
\hline
\textbf{Case} & \textbf{Method} & \textbf{Time [ms]} & $\|\nabla_p U_T\|$ & $\|\nabla_v U_T\|$ \\
\hline
\multirow{2}{*}{\shortstack[l]{(a) Striker\\advantage}} 
 & CI &  56.8  & 4.55 & 2.01 \\
 & FD & 1626.8 & 4.49 & 2.00 \\[1ex]
\multirow{2}{*}{\shortstack[l]{(b) Defender\\advantage}} 
 & CI &  74.4  & 3.29 & 1.44 \\
 & FD & 1604.2 & 3.28 & 1.45 \\[1ex]
\multirow{2}{*}{\shortstack[l]{(c) Asymmetric\\offset}} 
 & CI &  68.9  & 2.90 & 1.27 \\
 & FD & 1709.5 & 2.90 & 1.28 \\
\hline
\end{tabular}
\end{table}

\subsection{Case Study: Two Strikers vs. One Defender on a Half Football Pitch}
\label{sec:football}

We consider the defensive scenario on a half football pitch shown in Fig.~\ref{fig:casestudy}, where two red strikers attack and one blue defender protects the region near the penalty spot. All agents use LQR-CIV costs (second-order dynamics with linear drag), as described in Sec.~\ref{sec:lqr_drag}. Spatial competition is represented by the resulting CIV regions, while gradients indicate how each agent can adjust its state to improve team performance. A Gaussian utility density centered near the penalty spot emphasizes the tactical relevance (e.g., probability of scoring) of that area. Fig.~\ref{fig:casestudy} illustrates three representative situations with parameters listed in Table~\ref{tab:params}:

\begin{enumerate}[(a)]
\item When the blue defender is weaker (larger drag and control weights $(a,r)$), its CIV region $G_i$ contracts. The red attacker gradients point toward the high-value zone while also reducing the defender’s region of influence.
\item When these weights are reversed, the blue defender becomes stronger and its CIV region expands, causing the attackers’ gradients to favor positions on the flanks rather than approaching the defender.
\item With the same dynamics as in (b) but with the defender shifted upward, the boundaries become asymmetric: the upper striker moves outward to create separation while the lower striker moves toward the high-value region near the penalty spot.
\end{enumerate}

The simulation details were as follows.
The half-pitch has length $52.5\,\mathrm{m}$ and width $68\,\mathrm{m}$, including all standard markings. All positions were expressed in generic distance units, where $1$ unit corresponds to $5.25\,\mathrm{m}$.
A Gaussian utility density field $\phi(q)=\exp(-\|q-(2.9,0)\|^2/(2\cdot2^2))$ was used to model the tactical importance of the penalty spot.  
All agents had initial velocities $(1,0)$ in normalized units 
(about $19\,\mathrm{km/h}$, comparable to a player’s running speed). The drag and control-weight parameters $(a,r)$ for each scenario are summarized in Table~\ref{tab:params}.
The grid used was of size $350\times350$. All computations were done on an Intel\textsuperscript{\textregistered} Core\textsuperscript{TM} i7-9750H @ $2.60$\,GHz with $32$\,GB RAM.
 
\begin{table}
\vspace{4pt} 
\centering
\caption{LQR parameters $(a, r)$ for each configuration in Fig.~\ref{fig:casestudy}.}
\label{tab:params}
\renewcommand{\arraystretch}{1.1}
\setlength{\tabcolsep}{6pt}
\begin{tabular}{lcc}
\hline
\textbf{Scenario} & \textbf{Red agents $(a, r)$} & \textbf{Blue agent $(a, r)$} \\ 
\hline
(a) Striker advantage   & $(1.0,\,1.0)$ & $(1.5,\,1.5)$ \\ 
(b) Defender advantage  & $(1.5,\,1.5)$ & $(1.0,\,1.0)$ \\ 
(c) Asymmetric offset   & $(1.5,\,1.5)$ & $(1.0,\,1.0)$ \\ 
\hline
\end{tabular}
\end{table}

\section{CONCLUSION}
In this paper we introduced Cost-Induced Voronoi (CIV) regions and showed how they can be used to define agent and team utilities. Using the Reynolds Transport Theorem (RTT), we derived boundary-integral expressions for the gradients of these utilities. 

\addtolength{\textheight}{-12cm}   


\bibliographystyle{IEEEtran}
\bibliography{MyLibraryPetter}

\end{document}